\definecolor{codegreen}{rgb}{0,0.6,0}
\definecolor{codegray}{rgb}{0.5,0.5,0.5}
\definecolor{codepurple}{rgb}{0.58,0,0.82}
\definecolor{backcolour}{rgb}{0.95,0.95,0.92}
\lstdefinestyle{mystyle}{
backgroundcolor=\color{backcolour},
commentstyle=\color{codegreen},
keywordstyle=\color{magenta},
numberstyle=\tiny\color{codegray},
stringstyle=\color{codepurple},
basicstyle=\ttfamily\footnotesize,
breakatwhitespace=false,
breaklines=true,
captionpos=b,
keepspaces=true,
numbers=left,
numbersep=5pt,
showspaces=false,
showstringspaces=false,
showtabs=false,
tabsize=2
}
\theoremstyle{plain}
\newtheorem{theorem}{Theorem}
\theoremstyle{definition}
\newtheorem{assumption}{Assumption}
\newtheorem{remark}{Remark}
\title{Reasonable Effectiveness of Random Weighting:\\ A Litmus Test for Multi-Task Learning}
\author{%
Baijiong Lin$^1$, Feiyang Ye$^{1,2}$, Yu Zhang$^{1,4,}$\thanks{Corresponding author: Yu Zhang},~~Ivor W. Tsang$^3$ \\
$^1$ Department of Computer Science and Engineering, \\
Southern University of Science and Technology \\
$^2$ University of Technology Sydney\\
$^3$ Centre for Frontier AI Research (CFAR), \\
Agency for Science, Technology and Research (A$^*$STAR) \\
$^4$ Peng Cheng Laboratory\\
\texttt{bj.lin.email@gmail.com,12060007@mail.sustech.edu.cn}\\
\texttt{yu.zhang.ust@gmail.com,ivor\_tsang@ihpc.a-star.edu.sg} \\
}
\begin{document}

\maketitle

\begin{abstract}
Multi-Task Learning (MTL) has achieved success in various fields. However, how to balance different tasks to achieve good performance is a key problem. To achieve the task balancing, there are many works to carefully design dynamical loss/gradient weighting strategies but the basic random experiments are ignored to examine their effectiveness. In this paper, we propose the Random Weighting (RW) methods, including Random Loss Weighting (RLW) and Random Gradient Weighting (RGW), where an MTL model is trained with random loss/gradient weights sampled from a distribution. To show the effectiveness and necessity of RW methods, theoretically we analyze the convergence of RW and reveal that RW has a higher probability to escape local minima, resulting in better generalization ability. Empirically, we extensively evaluate the proposed RW methods to compare with twelve state-of-the-art methods on five image datasets and two multilingual problems from the XTREME benchmark to show RW methods can achieve comparable performance with state-of-the-art baselines. Therefore, we think that the RW methods are important baselines for MTL and should attract more attentions.
\end{abstract}

\section{Introduction}
\label{sec:intro}

Multi-Task Learning (MTL) \cite{caruana1997multitask, ZhangY21, Vandenhende21} aims to jointly train several related tasks to improve their generalization performance by leveraging common knowledge among them. Since MTL could not only significantly reduce the model size as well as speed up the inference but also improve the performance, it has been successfully applied to various fields \cite{ZhangY21}. However, when all the tasks are not highly related, which may be reflected via conflicting gradients or dominating gradients \cite{pcgrad}, it is more difficult to train an MTL model than training them separately because some tasks dominantly influence model parameters, leading to unsatisfactory performance for other tasks. This phenomenon is related to the \textit{task balancing} problem \cite{Vandenhende21} in MTL.



Recently, several works focus on tackling this issue from an optimization perspective via dynamically weighting task losses or balancing task gradients in the training process, called \textit{loss balancing} and \textit{gradient balancing} methods, respectively. However, all of the existing works take Equal Weighting (\textbf{EW}) which uses the fixed and equal weights in the whole training process as a basic baseline to test the effectiveness of their methods. We think that this baseline is not sufficient and it is quite necessary to conduct random experiments, which is missing in existing works, as a baseline to test them.

Therefore, in this paper, we propose the Random Weighting (\textbf{RW}) methods including Random Loss Weighting (\textbf{RLW}) and Random Gradient Weighting (\textbf{RGW}) as more reasonable baselines to test loss and gradient balancing methods, respectively. Specifically, in each training iteration, we first sample loss/gradient weights from a distribution with some possible normalization and then minimize the aggregated loss/gradient weighted by the random loss/gradient weights. Although the RW methods seem unreasonable, they can not only converge but also achieve comparable performance with existing methods that use carefully tuned weights. Thus, we think the RW methods are important baselines for MTL and deserve more attention.

To better understand the effectiveness and necessity of RW methods, we provide both theoretical analyses and empirical evaluations. Theoretically, we show RW methods are the stochastic variants of EW. From this perspective, we give a convergence analysis for RW methods. Besides, we can show that RW methods have a higher probability to escape local minima than EW, resulting in better generalization performance. Empirically, we investigate lots of State-Of-The-Art (SOTA) task balancing approaches including four loss balancing methods and eight gradient balancing methods. On five Computer Vision (CV) datasets and two multilingual problems from the XTREME benchmark \cite{hu20b}, we show that RW methods can consistently outperform EW and have competitive performance with existing SOTA methods.

In summary, the main contributions of this paper are three-fold.
\begin{itemize}


\item We propose the simple RW methods as novel baselines and litmus tests for MTL.

\item We provide the convergence guarantee and effectiveness analysis for RW methods.


\item Extensive experiments show that RW can outperform EW and achieve comparable performance with the SOTA methods.
\end{itemize}

\section{An Overview of Task Balancing Methods} \label{sec:preliminary}

\noindent\textbf{Notations.} Suppose there are $T$ tasks and task $t$ has its corresponding dataset $\mathcal{D}_t$. An MTL model usually contains two parts of parameters: task-sharing parameters $\theta$ and task-specific parameters $\{\psi_t\}_{t=1}^{T}$. For example, in CV, $\theta$ usually denotes parameters in the feature extractor shared by all the tasks and $\psi_t$ represents parameters in the task-specific output module for task $t$. Let $\ell_t(\mathcal{D}_t;\theta,\psi_t)$ denotes the average loss on $\mathcal{D}_t$ for task $t$. $\{\lambda_t\}_{t=1}^T$ are task-specific loss weights with a constraint that $\lambda_t^{l}\ge0$ for all $t$'s. Similarly, $\{\lambda_t^{g}\}_{t=1}^T$ denote task-specific gradient weights.

\noindent\textbf{Conventional Baseline with Fixed Weights.} Since there are multiple losses in MTL, they usually are aggregated as a single one via loss weights as
\begin{equation}
\mathcal{L}(\theta, \{\psi_t\}_{t=1}^T)=\sum_{t=1}^{T}\lambda_t^{l}\ell_t(\mathcal{D}_t;\theta,\psi_t).
\label{eq:mtl_loss}
\end{equation}
Apparently, the most simple method for loss weighting is to assign the same weight to all the tasks in the whole training process, i.e., without loss of generality, $\lambda_t^l=\frac{1}{T}$ for all $t$'s in every iteration. This approach is a common baseline in MTL and it is called EW in this paper. 

\noindent\textbf{Loss Balancing Methods.}
To achieve task balancing and improve the performance of MTL model, loss balancing methods aim to study how to generate appropriate loss weights $\{\lambda_t^l\}_{t=1}^T$ in Eq.~(\ref{eq:mtl_loss}) in every iteration and some representative methods include Uncertainty Weights (\textbf{UW}) \cite{kgc18}, Dynamic Weight Average (\textbf{DWA}) \cite{ljd19}, \textbf{IMTL-L} \cite{liu2021imtl} and Multi-Objective Meta Learning (\textbf{MOML}) \cite{ye2021multi}. These four methods focus on using higher loss weights for more difficult tasks measured by the uncertainty, learning speed, relative loss value, and validation performance, respectively. When minimizing Eq.~(\ref{eq:mtl_loss}), the learning rate of optimizing each task-specific parameter $\psi_t$ will be affected by the corresponding loss weight $\lambda_{t}^{l}$, which is the major difference between loss balancing and gradient balancing methods.

\noindent\textbf{Gradient Balancing Methods.} This type of methods think that the task balancing problem is caused by conflicting task gradients and the inappropriate gradient to update task-sharing parameters, thus they solve it via generating appropriate gradient weights $\{\lambda_t^g\}_{t=1}^T$ to balance the task gradients and make a better update of $\theta$ in every iteration as
\begin{equation}
\theta = \theta - \eta\sum_{t=1}^{T}\lambda_t^g\nabla_{\theta}\ell_t(\mathcal{D}_t;\theta,\psi_t).
\label{eq:mtl_grad}
\end{equation}
Noticeably, in such type methods, the gradient weights $\{\lambda_t^g\}_{t=1}^T$ only affect the task-sharing parameter $\theta$ but not task-specific parameters $\{\psi_t\}$, each of which is updated by the $t$-th task gradient $\nabla_{\psi_t}\ell_t(\mathcal{D}_t;\theta,\psi_t)$.

Some representative works include \textbf{MGDA-UB} \cite{sk18}, Gradient Normalization (\textbf{GradNorm}) \cite{chen2018gradnorm}, Projecting Conflicting Gradient (\textbf{PCGrad}) \cite{pcgrad}, Gradient sign Dropout (\textbf{GradDrop}) \cite{ChenNHLKCA20}, Impartial Multi-Task Learning (\textbf{IMTL-G}) \cite{liu2021imtl}, Gradient Vaccine (\textbf{GradVac}) \cite{wang2021gradient}, Conflict-Averse Gradient (\textbf{CAGrad}) \cite{liu2021conflict}, and \textbf{RotoGrad} \cite{javaloy2022rotograd}. Those eight methods focus on finding an aggregated gradient by linearly combining all the task gradients under different constraints such as equal gradient magnitude in GradNorm and equal gradient projection in IMTL-G to eliminate the gradient conflicting.

Compared with the EW method, those two types of methods use a dynamic weighting process where loss/gradient weights vary over training iterations or epochs. Thus, it is natural to think how about training an MTL model with random weights. Inspired by this, we propose the RW methods by randomly sampling loss/gradient weights in each iteration as the random experiments for loss/gradient balancing methods, respectively. Besides, we think RW methods are more reasonable baselines than EW as the litmus tests for MTL methods.



\section{The Random Weighting Methods} \label{sec:method}

In this section, we introduce the RW methods, including the RLW and RGW methods.

We focus on the update of task-sharing parameter $\theta$ as it is the key problem in MTL. In the following, we mainly introduce the RLW method as the RGW method acts similarly to the RLW method. For notation simplicity, we do not distinguish between ${\lambda}^l_t$ and ${\lambda}^g_t$ and denote them by ${\lambda_t}$. Besides, we denote $\bm{\ell}(\theta)=(\ell_1(\mathcal{D}_1;\theta, \psi_1), \cdots, \ell_T(\mathcal{D}_T;\theta, \psi_T))$, where the datasets $\{\mathcal{D}_t\}_{t=1}^T$ and the task-specific parameters $\{\psi_t\}_{t=1}^T$ are omitted for brevity.

Different from those loss balancing methods, RLW considers the loss weights $\bm{\lambda}=(\lambda_1,\cdots, \lambda_T)\in\mathbb{R}^T$ as random variables and samples them from a random distribution in each iteration. To guarantee loss weights in $\bm{\lambda}$ to be non-negative, we can first sample $\tilde{\bm{\lambda}}=(\tilde{\lambda}_1,\cdots, \tilde{\lambda}_T)$ from any distribution $p(\tilde{\bm{\lambda}})$ and then normalize $\tilde{\bm{\lambda}}$ into $\bm{\lambda}$ via a mapping $f$, where $f:\mathbb{R}^T\rightarrow\Delta^{T-1}$ is a normalization function such as the softmax function and $\Delta^{T-1}$ denotes a simplex in $\mathbb{R}^T$, i.e., $\bm{\lambda}\in\Delta^{T-1}$ means $\sum_{t=1}^T\lambda_t=1$ and $\lambda_t\ge0$ for all $t$. Note that $p(\bm{\lambda})$ is different from $p(\tilde{\bm{\lambda}})$ unless $f$ is an identity function. Finally, RLW updates the $\theta$ by computing the aggregated gradient $\nabla_{\theta} \bm{{\lambda}}^\top\bm{\ell}(\theta)$.

In this way, the RLW method uses dynamical loss weights in the training process, which is similar to existing loss balancing methods, but RLW uses random weights instead of carefully designed ones in the existing works. Therefore, RLW is a basic random experiment for those loss balancing methods to examine their effectiveness, which indicates RLW is a more reasonable baseline than the conventional EW.

\vspace{-0.2in}
\begin{minipage}{0.475\textwidth}
\begin{algorithm}[H]
\caption{A Training Iteration in RLW}
\label{alg:RLW}
\begin{algorithmic}[1]
\State {\bfseries Input:} numbers of tasks $T$, learning rate $\eta$, dataset $\{\mathcal{D}_t\}_{t=1}^T$, weight distribution $p(\tilde{\bm{\lambda}})$, normalization function $f$
\State {\bfseries Output:} task-sharing parameter $\theta^\prime$, task-specific parameters $\{\psi_t^\prime\}_{t=1}^T$
\For{$t=1$ {\bfseries to} $T$}
\State Compute loss $\ell_t({\mathcal{D}}_t;\theta, \psi_t)$;
\EndFor
\State Sample weights $\tilde{\bm{\lambda}}$ from $p(\tilde{\bm{\lambda}})$ and normalize it into ${\bm{\lambda}}$ via $f$;\Comment{RLW Method}
\State $\theta^\prime=\theta-\eta\nabla_{\theta}\sum_{t=1}^T{\lambda}_t\ell_t({\mathcal{D}}_t;\theta, \psi_t)$;
\For{$t=1$ {\bfseries to} $T$}
\State $\psi_t^\prime=\psi_t-\eta\nabla_{\psi_t}{\lambda}_t\ell_t({\mathcal{D}}_t;\theta, \psi_t)$;
\EndFor
\end{algorithmic}
\end{algorithm}
\end{minipage}
\hfill
\begin{minipage}{0.475\textwidth}
\begin{algorithm}[H]
\caption{A Training Iteration in RGW}
\label{alg:RGW}
\begin{algorithmic}[1]
\State {\bfseries Input:} numbers of tasks $T$, learning rate $\eta$, dataset $\{\mathcal{D}_t\}_{t=1}^T$, weight distribution $p(\tilde{\bm{\lambda}})$, normalization function $f$
\State {\bfseries Output:} task-sharing parameter $\theta^\prime$, task-specific parameters $\{\psi_t^\prime\}_{t=1}^T$
\For{$t=1$ {\bfseries to} $T$}
\State Compute loss $\ell_t({\mathcal{D}}_t;\theta, \psi_t)$;
\State Compute gradient $g_t=\nabla_{\theta}\ell_t$ or $\nabla_{z}\ell_t$;
\EndFor
\State Sample weights $\tilde{\bm{\lambda}}$ from $p(\tilde{\bm{\lambda}})$ and normalize it into ${\bm{\lambda}}$ via $f$;\Comment{RGW Method}
\State $\theta^\prime=\theta-\eta\nabla_{\theta}\sum_{t=1}^T{\lambda}_t g_t$;
\For{$t=1$ {\bfseries to} $T$}
\State $\psi_t^\prime=\psi_t-\eta\nabla_{\psi_t}\ell_t({\mathcal{D}}_t;\theta, \psi_t)$;
\EndFor
\end{algorithmic}
\end{algorithm}
\end{minipage}

Noticeably, the loss weights $\bm{\lambda}$ are random variables and vary over training iterations, thus it is apparently that the gradient $\nabla_{\theta} \bm{{\lambda}}^\top\bm{\ell}(\theta)$ of RLW is an unbiased estimation of the gradient $\mathbb{E}[\bm{\lambda}]^\top\nabla_{\theta}\bm{\ell}(\theta)$, where $\mathbb{E}[\bm{\lambda}]$ is the expectation of $\bm{\lambda}$ over the whole training process. This means that the RLW method is a stochastic variant of the loss balancing method with fixed weights $\mathbb{E}[\bm{\lambda}]$.
In particular, if $\mathbb{E}[\bm{\lambda}]$ is proportional to $(\frac{1}{T}, \cdots, \frac{1}{T})$, RLW is a stochastic variant of the conventional EW baseline. In Section \ref{sec:analysis}, we theoretically show that RLW has a better generalization performance that EW because of the extra randomness from loss weight sampling, which indicates the RLW method is a more effective baseline than EW.

Similar to RLW, in each iteration, RGW first randomly samples gradient weights $\tilde{\bm{\lambda}}$ from $p(\tilde{\bm{\lambda}})$, then normalizes it to obtain $\bm{\lambda}$ via $f$, and finally updates the task-sharing parameter $\theta$ by computing the aggregated gradient $\nabla_{\theta} \bm{{\lambda}}^\top\bm{\ell}(\theta)$. Following previous works \cite{sk18, ChenNHLKCA20, liu2021imtl, javaloy2022rotograd}, we compute the gradient with respect to the final hidden feature representation $z$ output from the shared parameter instead of the task-sharing parameter $\theta$ to reduce the computational cost. 
Thus, RGW is a random experiment for gradient balancing methods.

In this paper, we use the standard normal distribution for $p(\tilde{\bm{\lambda}})$ and the softmax function for $f$ in both the RLW and RGW methods since it is easy to implement, has a more stable performance (as shown in  experimental results in Section \ref{sec:robustness_on_distribution}), and is as efficient as the EW strategy (as shown in experimental results in Section \ref{sec:convergence_speed}). Besides, $\mathbb{E}[\bm{\lambda}]$ is proportional to $(\frac{1}{T}, \cdots, \frac{1}{T})$ as proved in Appendix~\ref{sec:mean}, thus it is fair to compare with the EW strategy.

The training algorithms of both RW methods are summarized in Algorithm \ref{alg:RLW} and \ref{alg:RGW}. The only difference between the RW methods and the existing works is the generation of loss/gradient weights (i.e., Line 6 in Algorithm \ref{alg:RLW} and Line 7 in Algorithm \ref{alg:RGW}). Apparently, the sampling operation in the RW methods is very easy to implement and only bring negligibly additional computational costs when compared with the existing works. Note that random weights are involved in the update of task-specific parameters in the RLW method but not the RGW method (i.e., Line 9 in Algorithm \ref{alg:RLW} and Line 10 in Algorithm \ref{alg:RGW}).


\section{Analysis}
\label{sec:analysis}

In this section, we analyze how the extra randomness from the loss/gradient weight sampling affects the convergence and effectiveness of the RW methods compared with the EW strategy.

We focus on the update of task-sharing parameter $\theta$ and take RLW as an example for analysis, which can easily be extended to the RGW method. For notation simplicity, we simply use $\ell_t(\theta)$ instead of $\ell_t(\mathcal{D}_t;\theta,\psi_t)$ to denote the loss function of task $t$ in this section and  Appendix~\ref{sec:analysis_appendix}. For ease of analysis, we make the following assumption.
\begin{assumption} \label{assumption}
$\mathbb{E}_{\mathcal{D}_t}[\| \nabla \ell_t(\mathcal{D}_t; \theta) \|^2]$ equals $\sigma_t^2$, the loss function $\ell_t(\theta)$ of task $t$ is $L_t$-Lipschitz continuous w.r.t. $\theta$, and $\bm{\lambda}$ satisfies $\mathbb{E}_{\bm{\lambda}}[\bm{\lambda}] = \bm{\mu}$.
\end{assumption}

In the following theorem, we analyze the convergence property of Algorithm \ref{alg:RLW} for the RLW method.
\begin{theorem} \label{thm:1}
Suppose the loss function $\ell_t(\theta)$ of task $t$ is $c_t$-strongly convex. We define $\theta_* = \arg\min_{\theta}\bm{\lambda}^\top\bm{\ell}(\theta)$ and denote by $\theta_k$ the solution in the $k$-th iteration. If $\eta$, the step size or equivalently the learning rate, satisfies $\eta\le 1 / 2c$, where $c = \min_{1 \le t \le T} \{c_t\}$, then under Assumption \ref{assumption} we have
\begin{equation}\label{thm:1eq}
\mathbb{E}[\| \theta_k - \theta_* \|^2] \le (1-2\eta c)^k \| \theta_0 - \theta_* \|^2 +\frac{\eta \kappa }{2c},
\end{equation}
where
$\kappa = \sum_{t=1}^T \sigma_t^2$. Then for any positive $\varepsilon$, $\mathbb{E} [\| \theta_k - \theta_* \|^2] \le \varepsilon $ can be achieved after $k = \frac{ \kappa}{2 \varepsilon c^2} \log \left( \frac{\varepsilon_0}{\varepsilon} \right)$ iterations with $\eta = \frac{\varepsilon c}{ \kappa}$, where $\varepsilon_0 =\mathbb{E} [\| \theta_0 - \theta_* \|^2] $.
\end{theorem}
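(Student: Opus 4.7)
The plan is to follow the standard template for SGD convergence on strongly convex objectives, adapted to the doubly stochastic setting of RLW. First I would write the one-step recursion $\|\theta_{k+1}-\theta_*\|^2 = \|\theta_k-\theta_*\|^2 - 2\eta\langle g_k,\theta_k-\theta_*\rangle + \eta^2\|g_k\|^2$, where $g_k = \sum_t \lambda_t^{(k)}\nabla_\theta \ell_t(\tilde{\mathcal{D}}_t^{(k)};\theta_k)$ is exactly the stochastic direction used in Line 8 of Algorithm~\ref{alg:RLW}. Conditioning on $\theta_k$ and taking expectation over both the mini-batch $\tilde{\bm{\mathcal{D}}}^{(k)}$ and the weight vector $\bm{\lambda}^{(k)}$ (which are drawn independently inside one iteration), the unbiasedness derivation already given in the paper, namely $\mathbb{E}_{\bm{\lambda}}\mathbb{E}_{\tilde{\bm{\mathcal{D}}}}[g_k] = \nabla\mathcal{L}_{\mathrm{RLW}}(\theta_k)$, collapses the cross term to $-2\eta\langle\nabla\mathcal{L}_{\mathrm{RLW}}(\theta_k),\theta_k-\theta_*\rangle$.

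Next I would invoke strong convexity of $\mathcal{L}_{\mathrm{RLW}}$. Because $\mathcal{L}_{\mathrm{RLW}}(\theta)=\bm{\mu}^\top\bm{\ell}(\theta)$ is a convex combination of $c_t$-strongly convex functions with $\sum_t \mu_t = 1$ (the normalization $f$ lands in $\Delta^T$, so $\mathbb{E}[\bm{\lambda}]$ does too), it is itself at least $c$-strongly convex with $c=\min_t c_t$. Then $\nabla\mathcal{L}_{\mathrm{RLW}}(\theta_*)=0$ gives $\langle\nabla\mathcal{L}_{\mathrm{RLW}}(\theta_k),\theta_k-\theta_*\rangle \ge c\|\theta_k-\theta_*\|^2$. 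For the $\eta^2\mathbb{E}\|g_k\|^2$ term, I would apply Cauchy--Schwarz with weights $\lambda_t$ to get $\|g_k\|^2 \le \sum_t \lambda_t\|\nabla_\theta \ell_t(\tilde{\mathcal{D}}_t;\theta_k)\|^2$, then use the task-gradient second-moment hypothesis (the natural reading of Assumption~\ref{assumption}) together with $\lambda_t\le 1$ to bound the expectation by $\kappa = \sum_t \sigma_t^2$. Combining the three pieces yields the contraction $\mathbb{E}\|\theta_{k+1}-\theta_*\|^2 \le (1-2\eta c)\mathbb{E}\|\theta_k-\theta_*\|^2 + \eta^2\kappa$, which is genuinely contractive precisely when $\eta\le 1/(2c)$.

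From there I would unroll the recursion and sum the geometric series $\sum_{j=0}^{k-1}(1-2\eta c)^j \le 1/(2\eta c)$ to obtain Equation~(\ref{thm:1eq}). For the iteration complexity, I would set $\eta = \varepsilon c/\kappa$ so that the stationary noise term equals $\eta\kappa/(2c)=\varepsilon/2$, then solve $(1-2\eta c)^k\varepsilon_0 \le \varepsilon/2$ for $k$ via the elementary bound $\log(1-x)\le -x$, which delivers the claimed $k=\frac{\kappa}{2\varepsilon c^2}\log(\varepsilon_0/\varepsilon)$ count (up to the constant absorbed into $\log(2\varepsilon_0/\varepsilon)$ vs.\ $\log(\varepsilon_0/\varepsilon)$).

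The main obstacle I expect is the $\mathbb{E}\|g_k\|^2$ bound, because Assumption~\ref{assumption} as literally stated controls $\mathbb{E}[\|\ell_t(\mathcal{D}_t;\theta)\|^2]$ -- the loss value rather than its gradient -- so I would have to argue that $\sigma_t^2$ should instead be read as a bound on $\mathbb{E}\|\nabla_\theta \ell_t\|^2$, either by invoking the $L_t$-Lipschitz hypothesis (which implies $\|\nabla_\theta \ell_t\|\le L_t$, giving a deterministic bound $L_t^2$ in place of $\sigma_t^2$) or by reinterpreting the assumption. A secondary subtlety worth flagging is the decoupling of expectations over $\bm{\lambda}^{(k)}$ and $\tilde{\bm{\mathcal{D}}}^{(k)}$: the tower-of-expectations step relies on independence within an iteration, which Algorithm~\ref{alg:RLW} does satisfy but which should be stated explicitly to make the unbiasedness and variance bounds fully rigorous.
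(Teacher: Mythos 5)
Your proposal is correct and follows essentially the same route as the paper's proof: the standard one-step expansion, unbiasedness over both sources of randomness, the Cauchy--Schwarz/Jensen bound on $\mathbb{E}\|g_k\|^2$ by $\kappa=\sum_t\sigma_t^2$ using $0\le\lambda_t\le1$, strong convexity of the weighted objective with modulus $c=\min_t c_t$, and unrolling the geometric recursion; the paper's own proof likewise reads $\sigma_t^2$ as a second-moment bound on the task gradients, exactly the reinterpretation you flag. The only divergence is in the final complexity step, where the paper plugs the one-step-optimal $\eta_*=\varepsilon c/\kappa$ directly into the recursion to get $\log(\varepsilon_0/\varepsilon)$, while your noise-floor-at-$\varepsilon/2$ argument gives the same rate up to the constant inside the logarithm, as you note.
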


Theorem \ref{thm:1} shows that the RLW method with a fixed step size has a linear convergence up to a radius around the optimal solution, which is similar to the EW strategy according to the property of the standard Stochastic Gradient Descent (SGD) method \cite{moulines2011non, NeedellSW16}. Although the RLW method has a larger $\kappa$ than the EW strategy, i.e., $\kappa_{\mathrm{EW}} = \sum_{t=1}^T \mu_t^2 \cdot \sum_{t=1}^T \sigma_t^2\le\kappa$, which may possibly require more iterations for the RLW method to reach the same accuracy as the EW strategy, experimental results in Section~\ref{sec:convergence_speed} show that empirically this does not cause much difference.

We next analyze the effectiveness of the RLW method from the perspective of stochastic optimization. It is observed that the SGD method can escape sharp local minima and converge to a better solution than Gradient Descent (GD) techniques under various settings with the help of noisy gradients \cite{hardt2016train, kleinberg2018alternative}. Inspired by those works, we prove Theorem \ref{thm:2} to show that the extra randomness in the RLW method can help RLW to better escape sharp local minima and achieve better generalization performance than the EW strategy.

Before presenting the theorem, for the ease of presentation, we introduction some notations. Here we consider the update step of these stochastic methods as $\theta_{k+1} = \theta_{k} - \eta(\nabla \bm{\mu}^\top\bm{\ell}(\theta_k) +\xi_k)$,  where $\xi_k$ is a noise with $\mathbb{E}[\xi_k]=0$ and $\|\xi_k\|^2 \le r$, and $r$ denotes the intensity of the noise. For the analysis, we construct an intermediate sequence $\varphi_k = \theta_{k} - \eta \nabla\bm{\mu}^\top \bm{\ell}(\theta_k)$. Then we get $\mathbb{E}_{\xi_k}[\varphi_{k+1}] = \varphi_k - \eta \nabla \mathbb{E}_{\xi_k}[\bm{\mu}^\top\bm{\ell}(\varphi_k - \eta \xi_k)]$. Therefore, the sequence $\{\varphi_k\}$ can be regarded as an approximation of using GD to minimize the function $\mathbb{E}_{\xi_k}[\bm{\mu}^\top\bm{\ell}(\varphi - \eta \xi_k)]$.

\begin{theorem} \label{thm:2}
Suppose $\nabla \ell_t(\theta)$ is $M_t$-Lipschitz continuous and $\|\xi_k\|^2 \le r$. If the loss function $\ell_t(\theta)$ of task $t$ is $c_t$-one point strongly convex w.r.t. a local minimum $\theta_*$ after convolved with noise $\xi$, i.e., $\left< \nabla \mathbb{E}_{\xi}\ell_t(\varphi - \eta \xi),\varphi - \theta_* \right> \ge c_t \|\varphi- \theta_* \|^2$, then under Assumption \ref{assumption}, after $K = \frac{1}{\rho}\log \left(\frac{\rho\varepsilon_0}{\beta}\right) $ iterations with $\eta \le \frac{c}{M^2}$, with probability at least $1-\delta$, we have $\|\varphi_K - \theta_* \|^2 \le  \frac{2\beta}{\rho\delta},$  where $\varepsilon_0 =\mathbb{E} [\| \varphi_0 - \theta_* \|^2] $, $c = \min_{1 \le t \le T} \{c_t\}$, $M = \max_{1 \le t \le T} \{M_t\}$, $\rho = 2\eta c - \eta^2M^2$, and $\beta = \eta^2r^2(1+\eta M)^2$.
\end{theorem}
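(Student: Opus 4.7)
The plan is to track the auxiliary sequence $\varphi_k = \theta_k - \eta \nabla\bm{\mu}^\top\bm{\ell}(\theta_k)$ introduced just before the statement, and to derive a one-step contraction for $\|\varphi_k - \theta_*\|^2$. Writing $g(\varphi) = \mathbb{E}_{\xi}[\bm{\mu}^\top\bm{\ell}(\varphi - \eta \xi)]$, a direct rearrangement of the update rule gives
\[
\varphi_{k+1} - \theta_* = \bigl(\varphi_k - \eta \nabla g(\varphi_k) - \theta_*\bigr) - \eta\bigl(\xi_k + \nabla\bm{\mu}^\top\bm{\ell}(\varphi_k - \eta\xi_k) - \nabla g(\varphi_k)\bigr).
\]
Because $\mathbb{E}[\xi_k] = 0$ and $\mathbb{E}_\xi[\nabla\bm{\mu}^\top\bm{\ell}(\varphi_k - \eta\xi)] = \nabla g(\varphi_k)$, the second bracket has zero conditional mean given $\varphi_k$, so after squaring and taking the conditional expectation over $\xi_k$ the cross term vanishes and the two pieces decouple.

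Next I would bound the two pieces separately. For the ``pseudo-GD on $g$'' piece I expand $\|\varphi_k - \eta\nabla g(\varphi_k) - \theta_*\|^2$, apply the one-point strong convexity hypothesis (which transfers from each $\ell_t$ to $g$ because the weights $\mu_t$ form a convex combination, so $c = \min_t c_t$ is the effective constant), and use the fact that each $\nabla\ell_t$ is $M_t$-Lipschitz, hence $\nabla g$ is $M$-smooth with $M = \max_t M_t$, together with $\nabla g(\theta_*) = 0$, to bound $\|\nabla g(\varphi_k)\| \le M\|\varphi_k - \theta_*\|$. This produces the contraction factor $1 - 2\eta c + \eta^2 M^2 = 1 - \rho$. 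For the noise piece, the triangle inequality and gradient Lipschitz bound give $\|\nabla\bm{\mu}^\top\bm{\ell}(\varphi_k - \eta\xi_k) - \nabla g(\varphi_k)\| \le \eta M\|\xi_k\|$, so $\mathbb{E}[\|\xi_k + \nabla\bm{\mu}^\top\bm{\ell}(\varphi_k - \eta\xi_k) - \nabla g(\varphi_k)\|^2] \le r^2(1+\eta M)^2$. Assembling, the one-step recursion reads $\mathbb{E}[\|\varphi_{k+1} - \theta_*\|^2 \mid \varphi_k] \le (1-\rho)\|\varphi_k - \theta_*\|^2 + \beta$.

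Iterating this recursion and summing the geometric series yields $\mathbb{E}[\|\varphi_K - \theta_*\|^2] \le (1-\rho)^K \varepsilon_0 + \beta/\rho$. With $K = \rho^{-1}\log(\rho\varepsilon_0/\beta)$ and the elementary inequality $(1-\rho)^K \le e^{-\rho K}$, the initial-condition term drops to $\beta/\rho$, so the total expected squared distance is at most $2\beta/\rho$. A single application of Markov's inequality then upgrades this expected bound to $\Pr(\|\varphi_K - \theta_*\|^2 \ge 2\beta/(\rho\delta)) \le \delta$, which is precisely the advertised high-probability statement. The step-size restriction $\eta \le c/M^2$ is exactly what is required to ensure $\rho = 2\eta c - \eta^2 M^2 > 0$, so that the contraction is genuine.

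The main obstacle I expect is making the per-task assumptions mesh with the aggregate analysis without losing constants. One-point strong convexity and smoothness are stated for each $\ell_t$ after convolution with $\xi$, so I need to verify that the weighted aggregate $\bm{\mu}^\top\bm{\ell}$ inherits both: the strong convexity constant comes out as $c = \min_t c_t$ (using $\mu_t \ge 0$ with $\sum_t\mu_t = 1$, so the weighted average of $c_t$ is bounded below by the minimum), and similarly the Lipschitz constant of $\nabla g$ is controlled by $M = \max_t M_t$. In parallel, the deviation bound on $\nabla\bm{\mu}^\top\bm{\ell}(\varphi_k - \eta\xi_k) - \nabla g(\varphi_k)$ must be tight enough to yield the $(1+\eta M)^2$ factor in $\beta$ rather than a looser constant. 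Once this bookkeeping is pinned down, the remainder is a textbook strongly-convex-SGD contraction followed by Markov's inequality.
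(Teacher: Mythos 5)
Your proposal is correct and follows essentially the same route as the paper's proof: a one-step contraction for the intermediate sequence $\varphi_k$, using the one-point strong convexity aggregated with $c=\min_t c_t$, the $M$-Lipschitz continuity of the aggregated gradient, and the bounded noise, followed by iterating the recursion, choosing $K$ to kill the initial-condition term, and applying Markov's inequality. The only organizational difference is that you center the update around the smoothed objective $g(\varphi)=\mathbb{E}_{\xi}[\bm{\mu}^\top\bm{\ell}(\varphi-\eta\xi)]$ so the cross term vanishes by conditional unbiasedness, whereas the paper expands the square directly and disposes of the cross terms using $\mathbb{E}[\xi_k]=0$ together with Lipschitzness; your bias--variance packaging is arguably cleaner but is the same computation. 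Two details to pin down. First, the pointwise bound $\|\nabla\bm{\mu}^\top\bm{\ell}(\varphi_k-\eta\xi_k)-\nabla g(\varphi_k)\|\le \eta M\|\xi_k\|$ is not literally valid, since you compare against the convolved mean: Jensen only gives $\eta M\,\mathbb{E}_{\xi'}\|\xi_k-\xi'\|\le 2\eta M r$, which would inflate $\beta$ to $\eta^2r^2(1+2\eta M)^2$. To recover exactly $\beta=\eta^2r^2(1+\eta M)^2$, expand $\mathbb{E}\|\xi_k+\Delta_k\|^2$ with $\Delta_k=\nabla\bm{\mu}^\top\bm{\ell}(\varphi_k-\eta\xi_k)-\nabla g(\varphi_k)$ and use $\mathbb{E}\|\Delta_k\|^2\le\mathbb{E}\|\nabla\bm{\mu}^\top\bm{\ell}(\varphi_k-\eta\xi_k)-\nabla\bm{\mu}^\top\bm{\ell}(\varphi_k)\|^2\le\eta^2M^2r^2$ (variance bounded by second moment about any point) and $\mathbb{E}\langle\xi_k,\Delta_k\rangle=\mathbb{E}\langle\xi_k,\nabla\bm{\mu}^\top\bm{\ell}(\varphi_k-\eta\xi_k)-\nabla\bm{\mu}^\top\bm{\ell}(\varphi_k)\rangle\le\eta M r^2$, which is in effect what the paper's term-by-term bounds do. Second, your step $\|\nabla g(\varphi_k)\|\le M\|\varphi_k-\theta_*\|$ invokes $\nabla g(\theta_*)=0$ (stationarity of the convolved objective at $\theta_*$), while the paper implicitly uses $\nabla\bm{\mu}^\top\bm{\ell}(\theta_*)=0$; both are readings of ``$\theta_*$ is a local minimum,'' so just state explicitly which one you assume.
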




Theorem~\ref{thm:2} only requires that $\ell_t(\theta)$ is $c_t$-one point strongly convex w.r.t. $\theta_*$ after convolved with noise $\xi$, which can hold for deep neural networks \cite{safran2021effects}. It also implies that for both RLW and EW methods, their solutions have high probabilities to be close to a local minimum $\theta_*$ depending on the noise $\xi$. Note that by adding extra noise, the sharp local minimum will disappear and only the flat local minimum with a large diameter will still exist \cite{kleinberg2018alternative}. On the other hand, those flat local minima could satisfy the one point strongly convexity assumption made in Theorem~\ref{thm:2}, thus the diameter of the converged flat local minimum is affected by the noise intensity.

\begin{remark}
Converging to flat local minima is important in neural network training because flat local minima may lead to better generalization \cite{chaudhari2019entropy, keskar2017large}. Due to the extra randomness from the sampling of loss weights, the RLW method can have a larger noise with a larger $r$ than the EW strategy (refer to Appendix \ref{appendix:noise}). Hence, according to Theorem~\ref{thm:2} and the above discussion, the RLW method can better escape sharp local minima and converge to a flatter local minimum than EW, resulting in better generalization performance.
\end{remark}

\section{Experiments}
In this section, we empirically evaluate the proposed RLW and RGW methods on five computer vision datasets (i.e., NYUv2, CityScapes, CelebA, Office-31, and Office-Home) and two multilingual problems from the XTREME benchmark \cite{hu20b}. All the experiments are conducted on one single NVIDIA GeForce RTX 3090 GPU. \textit{Due to page limit, experimental results on the CityScapes, CelebA, Office-31, and Office-Home datasets are put in Appendix \ref{sec:additional_results}.}

\noindent\textbf{Compared methods.} The baseline methods in comparison include several SOTA task balancing methods as introduced in Section \ref{sec:preliminary}, including four loss balancing methods (i.e., UW, DWA, IMTL-L, and MOML) and eight gradient balancing methods (i.e., MGDA-UB, GradNorm, PCGrad, GradDrop, IMTL-G, GradVac, CAGrad, and RotoGrad). For all the baseline methods, we directly use the optimal hyperparameters used in their original papers.

\noindent\textbf{Network architecture.} The network architecture we used adopts the Hard-Parameter Sharing (\textbf{HPS}) pattern \cite{Caruana93}, which shares bottom layers of the network for all the tasks and uses separate top layers for each task. Other MTL architectures are studied in Section \ref{sec:ablation_architecture}.

\noindent\textbf{Evaluation metric.}
For homogeneous MTL problems (e.g., the  XTREME benchmark and Office-31 dataset) which contain tasks of the same type such as the classification task, we directly use the average performance among tasks as the performance metric. For heterogeneous MTL problems (e.g., the NYUv2 dataset) that contain tasks of different types and may have multiple evaluation metrics for each task, by following \cite{ManinisRK19, Vandenhende21}, we use the average of the relative improvement over the EW method on each metric of each task as the performance measure, which is formulated as
\begin{equation*}
	\Delta_{\mathrm{p}}=100\%\times \frac{1}{T}\sum_{t=1}^T \frac{1}{N_t}\sum_{n=1}^{N_t}\frac{(-1)^{p_{t,n}}(M_{t,n}-M^{\mathrm{EW}}_{t,n})}{M^{\mathrm{EW}}_{t,n}},
\end{equation*}
where $N_t$ denotes the number of metrics in task $t$, $M_{t,n}$ denotes the performance of a task balancing method for the $n$th metric in task $t$, $M^{\mathrm{EW}}_{t,n}$ is defined similarly for the EW method, and $p_{t,n}$ is set to $1$ if a higher value indicates better performance for the $n$th metric in task $t$ and otherwise $0$.

\subsection{Results on the NYUv2 Dataset}

\noindent\textbf{Dataset.}
The NYUv2 dataset \cite{silberman2012indoor} is an indoor scene understanding dataset, which consists of video sequences recorded by the RGB and Depth cameras in the Microsoft Kinect. It contains 795 and 654 images for training and testing, respectively. This dataset includes three tasks: 13-class semantic segmentation, depth estimation, and surface normal prediction.

\noindent\textbf{Implementation details.}
For the NYUv2 dataset, the DeepLabV3+ architecture  \cite{ChenZPSA18} is used. Specifically, a ResNet-50 network pre-trained on the ImageNet dataset with dilated convolutions \cite{YuKF17} is used as a shared encoder among tasks and the Atrous Spatial Pyramid Pooling (ASPP) \cite{ChenZPSA18} module is used as the task-specific head for each task. Input images are resized to $288\times 384$. The Adam optimizer \cite{kingma2014adam} with the learning rate as $10^{-4}$ and the weight decay as $10^{-5}$ is used for training and the batch size is set to 8. We use the cross-entropy loss, $L_1$ loss, and cosine loss as the loss function of the semantic segmentation, depth estimation, and surface normal prediction tasks, respectively.

\noindent\textbf{Results.}
The results of different methods on the NYUv2 dataset are shown in Table~\ref{tbl:mtl-nyu_dmtl}. The top row shows the performance of the widely used EW strategy and we use it as a baseline to measure the relative improvement of different methods as shown in the definition of $\Delta_{\mathrm{p}}$. Rows 2-5 and 7-14 show the results of loss balancing and gradient balancing methods, respectively.

According to the results, we can see that both the RLW and RGW methods gain performance improvement over the EW strategy, which implies that training with extra randomness can have a better generalization ability. Besides, RLW has an improvement of $1.04\%$ over the EW strategy and it is higher than all loss balancing methods. As for gradient balancing methods, half of those methods have negligible or even negative improvement over the EW strategy and RGW can outperform five of them. Compared with all baselines, RLW is even higher than all of them except the CAGrad and RotoGrad methods, which indicates the random weights can easily beat the carefully designed ones.

According to the above analysis, there are two important conclusions. Firstly, the conventional EW strategy is a weaker baseline than RLW and RGW for MTL. Secondly, RW methods are competitive to SOTA methods and even performs better than some of them.



\begin{table*}[!h]
\centering
\caption{Performance on the \textbf{NYUv2} dataset with three tasks: 13-class semantic segmentation, depth estimation, and surface normal prediction. The best results for each task on each measure over loss/gradient balancing methods are marked with superscript $*$/$\dag$. The best results for each task on each measure over all methods are highlighted in \textbf{bold}. $\uparrow$ ($\downarrow$) indicates that the higher (lower) the result, the better the performance.}
\label{tbl:mtl-nyu_dmtl}
\resizebox{\textwidth}{!}{
\begin{tabular}{cccccccccccc}
\toprule
\multicolumn{2}{c}{\multirow{4}{*}{\textbf{Methods}}} & \multicolumn{2}{c}{\textbf{Segmentation}} & \multicolumn{2}{c}{\textbf{Depth}} & \multicolumn{5}{c}{\textbf{Surface Normal}} &
\multirow{4.5}{*}{\bm{$\Delta_{\mathrm{p}}$}${\uparrow}$}\\
\cmidrule(lr){3-4} \cmidrule(lr){5-6} \cmidrule(lr){7-11}
&& \multirow{2.5}{*}{\textbf{mIoU${\uparrow}$}} &  \multirow{2.5}{*}{\textbf{Pix Acc$\uparrow$}} &  \multirow{2.5}{*}{\textbf{Abs Err$\downarrow$}} &  \multirow{2.5}{*}{\textbf{Rel Err$\downarrow$}} & \multicolumn{2}{c}{\textbf{Angle Distance}} & \multicolumn{3}{c}{\textbf{Within $t^{\circ}$}} \\ \cmidrule(lr){7-8} \cmidrule(lr){9-11} & & & & & &\textbf{Mean$\downarrow$} & \textbf{Median$\downarrow$}  & \textbf{11.25$\uparrow$} & \textbf{22.5$\uparrow$} & \textbf{30$\uparrow$}  \\
\midrule
& EW & 53.77 & 75.45 & 0.3845 & 0.1605 & 23.5737 & 17.0438 & 35.04 & 60.93 & 72.07 & +0.00\%\\
\midrule
\multirow{5}{*}{\rotatebox{90}{Loss Bal.}} & UW & 54.14 & 75.92 & 0.3833 & 0.1597 & 23.2989 & 16.8691 & 35.33 & 61.37 & 72.48 & +0.64\%\\
& DWA & 53.81 & 75.56 & 0.3792$^*$ & 0.1565$^*$ & 23.6111 & 17.0609 & 34.89 & 60.89 & 71.97 & +0.63\%\\
& IMTL-L & 53.50 & 75.18 & 0.3824 & 0.1596 & 23.3805 & 16.8088 & 35.44 & 61.43 & 72.43 & +0.35\%\\
& MOML & \textbf{54.98}$^*$ & \textbf{75.98}$^*$ & 0.3877 & 0.1618 & 23.2401$^*$ & 16.7388 & 35.90$^*$ & 61.81$^*$ & 72.76$^*$ & +0.76\%\\
& \textbf{RLW (ours)} & 54.11 & 75.77 & 0.3809 & 0.1575 & 23.3777 & 16.7385$^*$ & 35.71 & 61.52 & 72.45 & +1.04\%$^*$ \\
\midrule
\multirow{9}{*}{\rotatebox{90}{Gradient Bal.}}  & MGDA-UB & 50.42 & 73.46 & 0.3834 & \textbf{0.1555}$^\dag$ & \textbf{22.7827}$^\dag$ & \textbf{16.1432}$^\dag$ & \textbf{36.90}$^\dag$ & 62.88 & 73.61 & +0.38\% \\
& GradNorm & 53.58 & 75.06 & 0.3931 & 0.1663 & 23.4360 & 16.9844 & 35.11 & 61.11 & 72.24 & -0.99\%\\
& PCGrad & 53.70 & 75.41 & 0.3903 & 0.1607 & 23.4281 & 16.9699 & 35.16 & 61.19 & 72.28 & -0.16\%\\
& GradDrop & 53.58 & 75.56 & 0.3855 & 0.1592 & 23.5518 & 17.0137 & 35.08 & 60.97 & 72.02 & +0.08\%\\
& IMTL-G & 53.54 & 75.45 & 0.3880 & 0.1589 & 23.0530 & 16.4328 & 36.21 & 62.31 & 73.15 & +0.80\% \\
& GradVac & 54.89$^\dag$ & \textbf{75.98}$^\dag$ & 0.3828 & 0.1635 & 23.6865 & 17.1301 & 34.82 & 60.71 & 71.81 & +0.07\%\\
& CAGrad & 53.12 & 75.19 & 0.3871 & 0.1599 & 22.5257 & 15.8821 & 37.42 & \textbf{63.50}$^\dag$ & \textbf{74.17}$^\dag$ & \textbf{+1.36\%}$^\dag$\\
& RotoGrad & 53.90 & 75.46 & 0.3812 & 0.1596 & 23.0197 & 16.3714 & 36.37 &  62.28 & 73.05 & +1.19\%\\
& \textbf{RGW (ours)} & 53.85 & 75.87 & \textbf{0.3772}$^\dag$ & 0.1562 & 23.6725 & 17.2439 & 34.62 & 60.49 & 71.75 & +0.62\%\\
\bottomrule
\end{tabular}}
\end{table*}



\subsection{Results on the XTREME benchmark}

\noindent\textbf{Dataset.}
The XTREME benchmark \cite{hu20b} is a large-scale multilingual multi-task benchmark for cross-lingual generalization evaluation, which covers fifty languages and contains nine tasks. We conduct experiments on two tasks containing Paraphrase Identification (PI) and Part-Of-Speech (POS) tagging in this benchmark. The datasets used in the PI and POS tasks are the PAWS-X dataset \cite{YangZTB19} and Universal Dependency v2.5 treebanks \cite{NivreMGHMPSTZ20}, respectively. On each task, we construct a multilingual problem by choosing the four languages with largest numbers of data, i.e., English (\texttt{en}), Mandarin (\texttt{zh}), German (\texttt{de}) and Spanish (\texttt{es}), for the PI task and English, Mandarin, Telugu (\texttt{te}) and Vietnamese (\texttt{vi}) for the POS task. The statistics for each language are summarized in Table~\ref{tab:nlp_data_count} in the Appendix. Different from the NYUv2 dataset where different tasks share the same input data, in those multilingual problems, each language/task has its own input data.

\noindent\textbf{Implementation details.}
For each multilingual problem in the XTREME benchmark, a pre-trained multilingual BERT (mBERT) model \cite{DevlinCLT19} implemented via the open-source transformers library \cite{WolfDSCDMCRLFDS20} is used as the shared encoder among languages and a fully connected layer is used as the language-specific output layer for each language. The Adam optimizer with the learning rate as $2\times 10^{-5}$ and the weight decay as $10^{-8}$ is used for training and the batch size is set to 32. The cross-entropy loss is used for the two multilingual problems.

\noindent\textbf{Results.}
According to experimental results shown in Table~\ref{tbl:mtl-multilingual_pi_pos}, we can find some empirical observations, which are similar to those on the NYUv2 dataset. Firstly, both the RLW and RGW strategies outperform the EW method. Secondly, compared with the existing works, RLW and RGW can achieve comparable performance with existing loss/gradient balancing methods, respectively. Even, RLW or RGW methods could outperform all baseline methods. For example, RLW achieves the best performance (i.e., 90.25\% average accuracy) on the PI problem and RGW achieves the best average F1 score of 91.16\% on the POS problem. It is interesting to find that the performance of RLW and RGW are inconsistent in different datasets. There is because the random loss weights in RLW will affect the update of task-specific parameters while not in RGW, which has a different influence on the performance of different datasets.

\begin{table*}[!h]
\caption{Performance on two multilingual problems, i.e., PI and POS from the \textbf{XTREME benchmark}. The best results for each language over loss/gradient balancing methods are marked with superscript $*$/$\dag$. The best results for each language over all methods are highlighted in \textbf{bold}.}
\label{tbl:mtl-multilingual_pi_pos}
\centering
\resizebox{\linewidth}{!}{
\begin{tabular}{cc|ccccc|ccccc}
\toprule
\multicolumn{2}{c|}{\multirow{2.5}{*}{\textbf{Methods}}} & \multicolumn{5}{c|}{\textbf{PI (Accuracy)}} & \multicolumn{5}{c}{\textbf{POS (F1 Score)}} \\
\cmidrule(r){3-7} \cmidrule(l){8-12}
&&{\texttt{en}} & {\texttt{zh}} & {\texttt{de}} & {\texttt{es}} & {\textbf{Avg}} &{\texttt{en}}&{\texttt{zh}}&{\texttt{te}}&{\texttt{vi}}&{\textbf{Avg}} \\
\midrule
&EW & 94.29 & 84.99 & 89.79 & 90.94 & 90.00 & 95.06 & 89.01 & 91.41 & 86.65 & 90.53\\
\midrule
\multirow{5}{*}{\rotatebox{90}{Loss Bal.}} & UW & 93.74 & 85.44$^*$ & \textbf{90.24}$^*$ & 91.29 & 90.18 & 94.89 & 88.77 & 90.96 & 87.12 & 90.44\\
& DWA & \textbf{94.69}$^*$ & 84.99 & 89.49 & \textbf{91.44}$^*$ & 90.15 & 95.02 & 89.03 & 91.87 & \textbf{87.27}$^*$ & 90.80 \\
& IMTL-L & 93.94 & 84.54 & 89.39 & \textbf{91.44}$^*$ & 89.82 & \textbf{95.57}$^*$ & 89.93$^*$ & 91.77 & 86.11 & 90.84\\
& MOML & 93.89 & 83.74 & 89.94 & 90.99 & 89.64 & {95.15} & {89.11} & 92.41 & 87.24 & {90.98}$^*$ \\
& \textbf{RLW (ours)} & 94.29 & 85.39 & 89.94 & 91.39 & \textbf{90.25}$^*$ & 95.01 & 88.87 & \textbf{92.86}$^*$ & 86.85 & 90.90 \\
\midrule
\multirow{9}{*}{\rotatebox{90}{Gradient Bal.}} & MGDA-UB & 94.09 & 84.14 & 89.14 & 90.59 & 89.49 & 94.89 & 88.43 & 91.01 & 86.04 & 90.01\\
& GradNorm & 94.19 & 83.59 & 88.89 & 91.24 & 89.47 & 94.88 & 88.80 & 91.78 & 86.96 & 90.61\\
& PCGrad & 94.19 & \textbf{85.49}$^\dag$ & 89.09 & 91.24 & 90.00 & 94.85 & 88.42 & 90.72 & 86.71 & 90.18 \\
& GradDrop & 94.29 & 84.44 & 89.69 & 90.94 & 89.84 & 95.08 & 89.06 & 90.65 & 87.17 & 90.49 \\
& IMTL-G & \textbf{94.69}$^\dag$ & 84.54 & 89.39 & 90.69 & 89.82 & 94.93 & 88.70 & 91.66 & 87.00 & 90.57\\
& GradVac & 94.29 & 84.94 & 89.19 & 90.89 & 89.83 & 94.87 & 88.41 & 90.62 & 86.47 & 90.09 \\
& CAGrad & 94.34 & 84.59 & 90.09$^\dag$ & 90.64 & 89.91 & 94.83 & 88.65 & 91.71 & 86.76 & 90.48\\
& RotoGrad & 93.99 & 83.89 & 89.29 & 90.94 & 89.52 & 95.44 & 89.79 & 91.42 & 86.33 & 90.74\\
& \textbf{RGW (ours)} & 94.55 & 84.99 & 89.29 & {91.40}$^\dag$ & 90.06$^\dag$ & 95.52$^\dag$ & \textbf{90.13}$^\dag$ & 91.82$^\dag$ & 87.18$^\dag$ & \textbf{91.16}$^\dag$\\
\bottomrule
\end{tabular}}
\end{table*}

\subsection{Robustness on Distribution} \label{sec:robustness_on_distribution}

In this section, we evaluate the robustness of the proposed RW methods on the sampling distribution. Taking RLW as an example, we show its robustness by evaluating with five different sampling distributions (i.e., $p(\tilde{\bm{\lambda}})$) for loss weights. The five distributions are uniform distribution between $0$ and $1$ (denoted by \textbf{Uniform}), standard normal distribution (denoted by \textbf{Normal}), Dirichlet distribution with $\alpha=1$ (denoted by \textbf{Dirichlet}), Bernoulli distribution with probability $1/2$ (denoted by \textbf{Bernoulli}), Bernoulli distribution with probability $1/2$ and a constraint $\sum_{t=1}^T\tilde{\lambda}_t=1$ (denoted by \textbf{c-Bernoulli}). We set $f$ as a function of $f(\tilde{\bm{\lambda}})=\tilde{\bm{\lambda}}/(\sum_{t=1}^T\tilde{\lambda}_t)$ for the Bernoulli distribution and the c-Bernoulli distribution, a softmax function for the Normal distribution and Uniform distribution, and an identity function for the Dirichlet distribution. We can prove that all the $\mathbb{E}[\bm{\lambda}]$'s under these five distributions equal  $(\frac{1}{T}, \cdots, \frac{1}{T})$ (refer to Appendix~\ref{sec:mean}), thus it is fair to compare among them.

Figure~\ref{fig:distribution} shows the results of the RLW method with five sampling distributions on the NYUv2 dataset in terms of $\Delta_{\mathrm{p}}$, where the experiment on each sampling distribution is repeated for 8 times. The results show that the RLW method with different distributions can always outperform the EW method, which shows the robustness of the RLW method with respect to the sampling distribution. In addition, compared with the uniform, Dirichlet, and Bernoulli distributions, RLW with the standard normal distribution achieves better and more stable performance. Although RLW with the c-Bernoulli distribution performs slightly better than the standard normal distribution, it is more unstable and may need a longer training time as shown in Section \ref{sec:convergence_speed}. Thus, in this paper, we use the standard normal distribution to sample loss weights.

\begin{wrapfigure}{r}{0.5\textwidth}
\vskip -0.4in
\centering
\includegraphics[width=\linewidth]{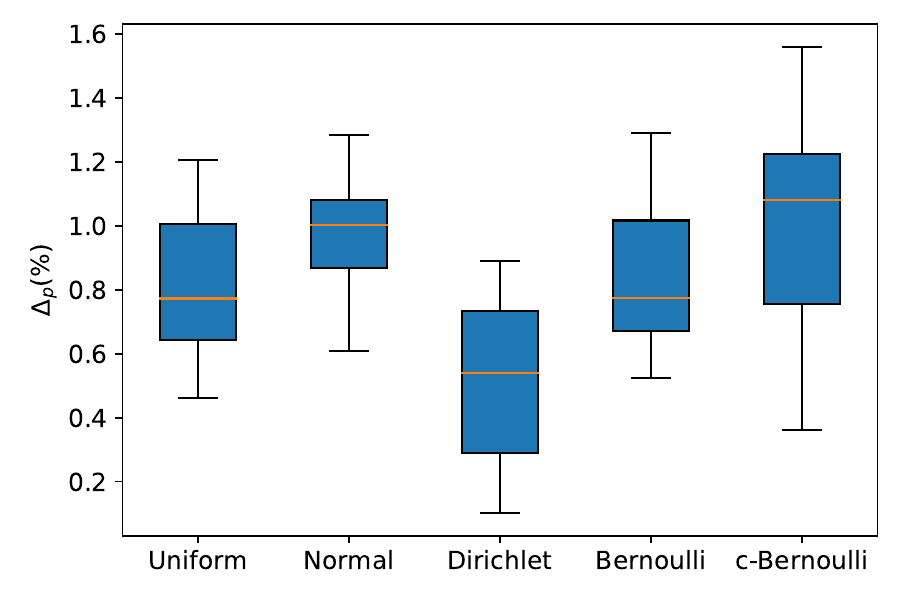}
\caption{Results of the RLW method with different sampling distributions in terms of $\Delta_{\mathrm{p}}$.}
\label{fig:distribution}
\end{wrapfigure}

\subsection{Convergence Speed} \label{sec:convergence_speed}

Here we take RLW as an example to show the efficiency of RW methods. Figure~\ref{fig:convergence} plots the performance curve on both NYUv2 and CelebA validation datasets to empirically compare the convergence speed of the EW and RLW methods.

On the NYUv2 dataset with three tasks, the performance curves of the RLW method with two sampling distributions are similar to that of the EW method, which indicates that the RLW method has a similar convergence property to the EW method on this dataset. As the number of tasks increases, i.e., on the CelebA dataset with 40 tasks, we find that the RLW method with the standard normal distribution still converges as fast as the EW method, while the RLW method with the c-Bernoulli distribution converges slower. One reason for this phenomenon is that only one task is used to update model parameters in each training iteration when using the c-Bernoulli distribution. Thus, in this paper, we use the standard normal distribution, which is as efficient as the EW method.

\begin{figure*}[!htbp]
	\centering
	\includegraphics[width=0.49\linewidth]{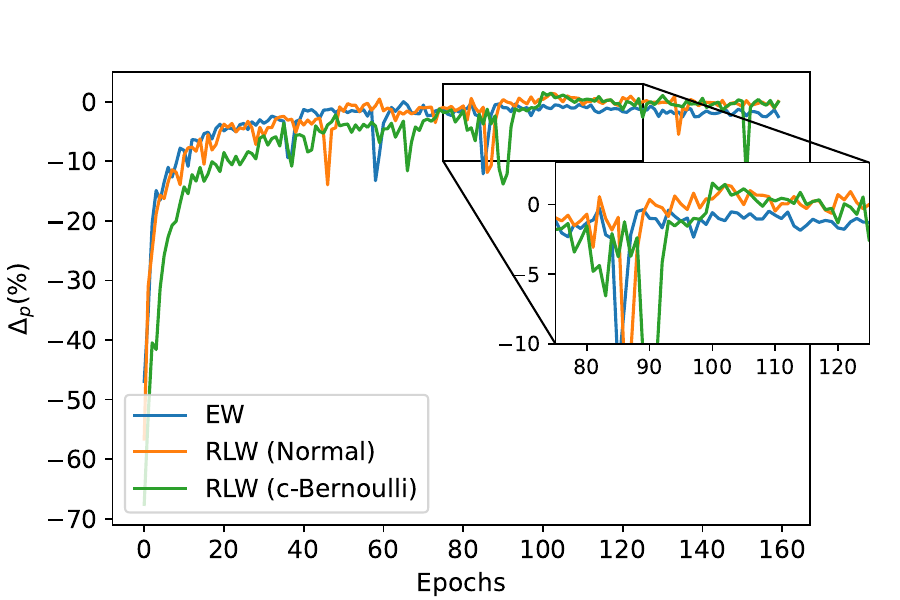}
	\hfill
	\includegraphics[width=0.49\linewidth]{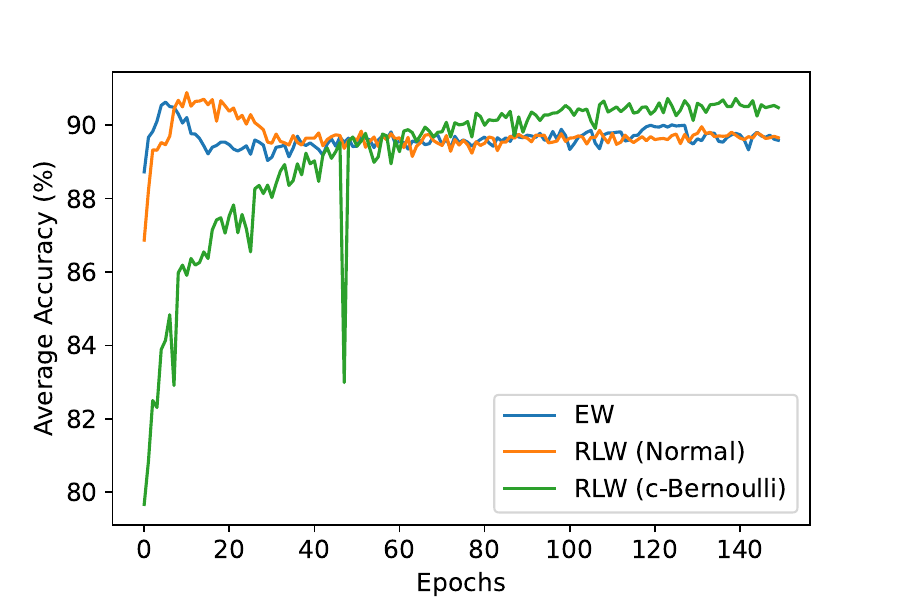}
	\caption{Comparison on the convergence speed of the EW and RLW methods on the NYUv2 validation dataset (\textbf{Left}) and the CelebA validation dataset (\textbf{Right}).}
	\label{fig:convergence}
\end{figure*}

\subsection{Combination of Loss and Gradient Balancing Methods} \label{sec:effectiveness_of_randomness}

The loss balancing methods are complementary with the gradient balancing methods. Following \cite{liu2021imtl}, we train an MTL model with different combinations of loss balancing and gradient balancing methods on the NYUv2 dataset to further improve the performance. We use the vanilla EW as the baseline to measure the relative improvement of the other different combinations as shown in the definition of $\Delta_{\mathrm{p}}$.

According to the results shown in Table~\ref{tbl:combinations}, we can see that combined with the UW, DWA and IMTL-L methods, some gradient balancing methods performs better but others become worse. For example, $\Delta_{\mathrm{p}}$ of the GradDrop method drops from $0.08\%$ to $-0.42\%$ when combined with DWA. Noticeably, by combining with the proposed RLW method, all the gradient balancing methods can achieve better performance. Besides, on each gradient balancing method, the improvement induced by the RLW method is significantly larger than the other three loss balancing methods as well as the EW method. Moreover, RGW can also improve the performance of loss balancing methods except DWA. Thus, this experiment further demonstrates the effectiveness of the proposed RW methods. 
\begin{table}[!h]
\centering
\caption{Results of different combinations of loss balancing and gradient balancing methods on the NYUv2 dataset in terms of $\Delta_{\mathrm{p}}$. The best results in each row are highlighted in \textbf{bold}.}
\label{tbl:combinations}
\begin{tabular}{cccccc}
\toprule
& EW & UW & DWA & IMTL-L &\textbf{RLW} \\
\midrule
Vanilla & +0.00\% & +0.64\% & +0.63\% & +0.35\% &\textbf{+1.04\%} \\
MGDA-UB & +0.38\% & +0.15\% & +0.47\% & -0.59\% &\textbf{+2.01\%}\\
GradNorm & -0.99\% & +0.87\% & -0.95\% & +0.54\% &\textbf{+0.89\%} \\
PCGrad & -0.16\% & +0.72\% & +0.19\% & +0.38\% &\textbf{+0.97\%} \\
GradDrop & +0.08\% & +0.25\% & -0.42\% & +0.36\% &\textbf{+0.93\%}\\
IMTL-G & +0.80\% & +0.45\% & +1.20\% & +0.18\% &\textbf{+1.50\%}\\
GradVac & +0.07\% & -0.03\% & +0.89\% & +0.69\% &\textbf{+0.97\%}\\
CAGrad & +1.36\% & +1.07\% & +1.41\% & +2.18\% &\textbf{+2.20\%}\\
RotoGrad & +1.19\% & +1.03\% & +0.75\% & +1.40\% & \textbf{+1.45\%}\\
\textbf{RGW} & +0.62\% & +0.82\% & +0.41\% & +0.78\% & \textbf{+1.46\%}\\
\bottomrule
\end{tabular}
\end{table}

\subsection{Effects of Different Architectures} \label{sec:ablation_architecture}

The proposed RW methods can be seamlessly incorporated into all the MTL architectures. To see this, we take RLW as an example and combine it with three different MTL architectures, i.e., \textbf{cross-stitch} network \cite{MisraSGH16}, Multi-Task Attention Network (\textbf{MTAN}) \cite{ljd19}, and \textbf{NDDR-CNN} \cite{gao2019nddr}. We use the combination of EW and HPS as the baseline to measure the relative improvement of the other different combinations as shown in the definition of $\Delta_{\mathrm{p}}$.

According to the results on the NYUv2 dataset as shown in Table~\ref{tbl:architectures}, we can see that the proposed RLW strategy outperforms the EW method under all the three architectures. When using the MTAN and NNDR-CNN architectures, RLW achieves better performance than the CAGrad method that performs best in the HPS architecture, which shows the potential of the proposed RLW method when choosing suitable MTL architectures. Moreover, combined with the RLW method, CAGrad can be further improved under the four architectures. For example, the combinations of RLW and CAGrad can achieve the best $\Delta_{\mathrm{p}}$ of $3.53\%$ under the NDDR-CNN architecture.

\begin{table}[!htbp]
\centering
\caption{Results of different combinations of task balancing methods and MTL architectures on the NYUv2 dataset in terms of $\Delta_{\mathrm{p}}$. The best results for each architecture are highlighted in \textbf{bold}.}
\label{tbl:architectures}
\begin{tabular}{ccccc}
\toprule
& HPS & Cross-stitch & MTAN & NDDR-CNN \\
\midrule
EW & +0.00\% & +1.43\% & +2.56\% & +1.90\% \\
CAGrad & +1.36\% & +2.42\% & +2.26\% & +2.83\%\\
\textbf{RLW} & +1.04\% & +2.23\% & {+2.66\%} & +2.91\%\\
\textbf{RLW}+CAGrad & \textbf{+2.20\%} & \textbf{+2.76\%} & \textbf{+2.92\%} & \textbf{+3.53\%}\\
\bottomrule
\end{tabular}
\end{table}

\section{Conclusions}

In this paper, we propose the RW methods, an important yet ignored baselines for MTL, by training an MTL model with random loss/gradient weights. We analyze the convergence and effectiveness properties of the proposed RW method. Moreover, we provide a consistent and comparative comparison to show the RW methods can achieve comparable performance with state-of-the-art methods that use carefully designed weights, which indicates the random experiments could be used to examine the effectiveness of newly proposed MTL methods and RW methods should attract wide attention as the litmus tests. In our future work, we will apply the RW methods to more MTL applications.

{\small
\bibliography{RLW}
\bibliographystyle{plain}
}

\newpage
\appendix


\section*{\centering {\Large Appendix}}

\section{Proof of the Mean Value $\mathbb{E}({\mathbf{\lambda}})$} \label{sec:mean}
Suppose that $\tilde{{\lambda}}_t (t=1,\cdots,T)$ are independent and identically distributed (i.i.d.) random variables sampled from the Uniform or standard Normal distributions and $f$ is the softmax function. Then we have ${\lambda}_t = \frac{\exp(\tilde{{\lambda}}_t)}{\sum_{m=1}^T\exp(\tilde{{\lambda}}_m)}$ and
$$\mathbb{E}({\lambda_i}) = \mathbb{E}[\exp(\tilde{{\lambda}}_i)]\mathbb{E}\bigg[\frac{1}{\sum_{m=1}^T\exp(\tilde{{\lambda}}_m)}\bigg]+ \mathrm{Cov}\left(\exp(\tilde{{\lambda}}_i), \frac{1}{\sum_{m=1}^T\exp(\tilde{{\lambda}}_m)}\right),$$
where $\mathrm{Cov}(\cdot, \cdot)$ denotes the covariance between two random variables.
Since $\{\tilde{{\lambda}}_t\}_{t=1}^T$ are i.i.d random variables, we have $\mathbb{E}[\exp(\tilde{{\lambda}}_i)]=\mathbb{E}[\exp(\tilde{{\lambda}}_j)]$ and $ \mathrm{Cov}(\exp(\tilde{{\lambda}}_i), 1/ \sum_{m=1}^T\exp(\tilde{{\lambda}}_m))= \mathrm{Cov}(\exp(\tilde{{\lambda}}_j), 1/ \sum_{m=1}^T\exp(\tilde{{\lambda}}_m))$.
Therefore, we obtain $$\mathbb{E}({\lambda}_i) = \mathbb{E}({\lambda}_j), \forall 1\le i, j \le T.$$
Moreover, we have $$\sum_{t=1}^T\mathbb{E}({\lambda}_t) = \sum_{t=1}^T\frac{\sum_{k=1}^K{\lambda}_{t}^k}{K}=\frac{\sum_{k=1}^K\sum_{t=1}^T{\lambda}_{t}^k}{K}=1.$$
Thus we have $\mathbb{E}(\bm{\lambda}) = (\frac{1}{T},\cdots,\frac{1}{T}).$
Similarly, we can prove the same result for the Bernoulli and c-Bernoulli distributions with the normalization function $f$ as $f(\tilde{\bm{\lambda}})=\tilde{\bm{\lambda}}/(\sum_{t=1}^T\tilde{\lambda}_t)$.

\section{Proof of Section \ref{sec:analysis}} \label{sec:analysis_appendix}
\subsection{Proof of Theorem \ref{thm:1}}
Suppose $\mathcal{L}_{\mathrm{RLW}}(\theta)=\bm{\lambda}^\top\bm{\ell}(\theta)$, where $\bm{\lambda}$ is a random variable sampled from a random distribution in every training iteration.

Since $\ell_t $ is $c_t$-strongly convex w.r.t. $\theta$,
for any two points $\theta_1$ and $\theta_2$ in $\mathbb{R}^d$, we have
\begin{align}
\label{eq:convex}
\left< \nabla \bm{\lambda}^\top \bm{\ell} (\theta_1) - \nabla \bm{\lambda}^\top \bm{\ell} (\theta_2), \theta_1 - \theta_2 \right> & = \sum_{t=1}^T \lambda_t \left< \nabla \ell_t(\theta_1) - \nabla \ell_t (\theta_2), \theta_1 - \theta_2 \right> \nonumber \\  &\ge \sum_{t=1}^T c_t\lambda_t \|\theta_1 - \theta_2 \|^2.
\end{align}
Since $0 \le \lambda_t \le 1$, we have $\sum_{t=1}^T c_t\lambda_t \ge c$, where $c = \min_{1 \le t \le T} \{c_t\}$. Then for any $\bm{\lambda}$, $\mathcal{L}_{\mathrm{RLW}}(\theta)$ is $c$-strongly convex.

With notations in Theorem \ref{thm:1}, we have
\begin{align*}
\|\theta_{k+1} - \theta_* \|^2 &= \|\theta_k - \theta_* - \eta \nabla\bm{{\lambda}}^\top\bm{\ell}(\tilde{\bm{\mathcal{D}}};\theta_k) \|^2 \\
&= \|\theta_k - \theta_* \|^2 - 2 \eta \left<\theta_k - \theta_*, \nabla\bm{{\lambda}}^\top\bm{\ell}(\tilde{\bm{\mathcal{D}}};\theta_k)\right> + \eta^2 \|\nabla\bm{{\lambda}}^\top\bm{\ell}(\tilde{\bm{\mathcal{D}}};\theta_k) \|^2.
\end{align*}
Note that $\mathbb{E}_{\bm{{\lambda}}} \left[\mathbb{E}_{\tilde{\bm{\mathcal{D}}}} [ \nabla \bm{{\lambda}}^\top\bm{\ell}(\tilde{\bm{\mathcal{D}}};\theta_k) ]\right] = \nabla \bm{\mu}^\top  \bm{\ell}(\bm{\mathcal{D}};\theta_k)$ and
\begin{align*}
\mathbb{E}_{\bm{{\lambda}}} \left[\mathbb{E}_{\tilde{\bm{\mathcal{D}}}} [ \|\nabla\bm{{\lambda}}^\top\bm{\ell}(\tilde{\bm{\mathcal{D}}};\theta_k) \|^2 ]\right]  &\le
\mathbb{E}_{\bm{{\lambda}}} \left[\mathbb{E}_{\tilde{\bm{\mathcal{D}}}} [ \|\bm{{\lambda}}^\top\|^2 \|\nabla\bm{\ell}(\tilde{\bm{\mathcal{D}}};\theta_k) \|^2 ]\right] \\
&\le  \mathbb{E}_{\bm{{\lambda}}}\bigg[\sum_{t=1}^T \lambda_t^2\bigg] \cdot \sum_{t=1}^T \sigma_t^2 \\
& \le\sum_{t=1}^T \sigma_t^2,
\end{align*}
where the first inequality is due to the Cauchy-Schwarz inequality and the third inequality is due to $0 \le \lambda_t \le 1$ and $\sum_t\lambda_t=1$.
Then, by defining $\kappa = \sum_{t=1}^T \sigma_t^2$, we obtain
\begin{align}
\label{p:1}
\mathbb{E}_{\bm{{\lambda}}} \left[\mathbb{E}_{\tilde{\bm{\mathcal{D}}}} [  \|\theta_{k+1} - \theta_* \|^2 ]\right] & \le \|\theta_k - \theta_* \|^2 - 2 \eta \left<\theta_k - \theta_*,\nabla \bm{\mu}^\top  \bm{\ell}(\theta_k)\right> +  \eta^2 \kappa \nonumber \\
& \le (1-2\eta c )  \|\theta_k - \theta_* \|^2 + \eta^2 \kappa.
\end{align}
If $1-2\eta c>0$, we recursively apply the inequality (\ref{p:1}) over the first $k$ iterations and we can obtain
\begin{align*}
\mathbb{E}[  \|\theta_{k+1} - \theta_* \|^2 ] & \le (1-2\eta c )^k  \|\theta_0 - \theta_* \|^2 + \sum_{j=0}^{k-1}(1-2\eta c )^j  \eta^2 \kappa \\
& \le (1-2\eta c )^k  \|\theta_0 - \theta_* \|^2 + \frac{\eta \kappa}{2c}.
\end{align*}
Thus the inequality (\ref{thm:1eq}) holds if $\eta \le \frac{1}{2c}$.

According to inequality (\ref{p:1}), the minimal value of a quadratic function
$ g_{\varepsilon}(\eta) = (1-2\eta c )\varepsilon + \eta^2 \kappa$ is achieved at $\eta_* = \frac{\varepsilon c}{\kappa}$. By setting $\|\theta_{0} - \theta_* \|^2 = \varepsilon_0$, we have
\begin{align*}
\mathbb{E}[  \|\theta_{k+1} - \theta_* \|^2 ] &\le g_{\|\theta_{k} - \theta_* \|^2}(\eta_*) \\
& = (1- \frac{2\|\theta_{k} - \theta_* \|^2 c^2}{\kappa})\|\theta_{k} - \theta_* \|^2 \\
& \le (1- \frac{2\varepsilon c^2}{\kappa}) \|\theta_{k} - \theta_* \|^2 \\
& \le (1- \frac{2\varepsilon c^2}{\kappa})^k \varepsilon_0.
\end{align*}
Then if $\mathbb{E}[  \|\theta_{k+1} - \theta_* \|^2 ] \ge \varepsilon$, we have $\varepsilon \le (1- \frac{2\varepsilon c^2}{\kappa})^k \varepsilon_0.$
Therefore, $k\le \frac{ \kappa}{2\varepsilon c^2} \log \left( \frac{\varepsilon_0}{\varepsilon} \right).$

\subsection{Proof of Theorem \ref{thm:2}}
Since $\varphi_k = \theta_k - \eta \nabla \bm{\mu}^\top\bm{\ell}(\theta_k)$ and $\theta_{k+1} = \theta_{k} - \eta(\nabla \bm{\mu}^\top\bm{\ell}(\theta_k) +\xi_k)$, we have $$\varphi_{k+1} = \varphi_k - \eta \xi_k - \nabla \bm{\mu}^\top\bm{\ell}(\varphi_k - \eta \xi_k ).$$
Since the loss function $\ell_t(\theta)$ of task $t$ is $c_t$-one point strongly convex w.r.t. a given point $\theta_*$ after convolved with noise $\xi$, similar to inequality (\ref{eq:convex}), we have
$$\left< \nabla \mathbb{E}_{\xi}[\bm{\mu}^\top\bm{\ell}(\varphi - \eta \xi)],\varphi - \theta_* \right> \ge c \|\varphi- \theta_* \|^2,$$
where $c = \min_{1 \le t \le T} \{c_t\}$. Since $\nabla \ell_t(\theta)$ is $M_t$-Lipschitz continuous, for any two points $\theta_1$ and $\theta_2$ in $\mathbb{R}^d$, we have
\begin{equation}
\label{eq:lips}
\| \nabla \bm{\mu}^\top \bm{\ell} (\theta_1) - \nabla \bm{\mu}^\top \bm{\ell} (\theta_2)\|  = \sum_{t=1}^T \mu_t \| \nabla \ell_t(\theta_1) - \nabla \ell_t (\theta_2)\| \le \sum_{t=1}^T M_t\mu_t \|\theta_1 - \theta_2 \|.
\end{equation}
Note that $\sum_{t=1}^T M_t\mu_t \le M$, where $M = \max_{1 \le t \le T} \{M_t\}$. Therefore,
$\nabla \bm{\mu}^\top\bm{\ell}(\theta)$ is $M$-Lipschitz continuous. Then we can get
\begin{align*}
\mathbb{E}[\| \varphi_{k+1} - \theta_*\|^2] &=   \mathbb{E}[\| \varphi_k - \eta \xi_k - \nabla \bm{\mu}^\top\bm{\ell}(\varphi_k - \eta \xi_k ) - \theta_*\|^2] \\
& \le \mathbb{E}[\| \varphi_k- \theta_*\|^2 +\| \eta \xi_k \|^2 +\| \nabla \bm{\mu}^\top\bm{\ell}(\varphi_k - \eta \xi_k )\|^2  - 2\left<\varphi_k- \theta_*,\eta \xi_k  \right>  \\
& \ ~~~~~~  - 2\left<\varphi_k- \theta_*,  \nabla \bm{\mu}^\top\bm{\ell}(\varphi_k - \eta \xi_k ) \right>+ 2\left< \nabla \bm{\mu}^\top \bm{\ell}(\varphi_k - \eta \xi_k ),  \eta \xi_k \right>]    \\
& \le \| \varphi_k- \theta_*\|^2 + \eta^2r^2 + \mathbb{E}[\| \nabla \bm{\mu}^\top\bm{\ell}(\varphi_k - \eta \xi_k )\|^2] - 2\eta c \| \varphi_k- \theta_*\|^2 \\
& \ ~~~~~~ + 2\mathbb{E}[\left< \nabla \bm{\mu}^\top\bm{\ell}(\varphi_k - \eta \xi_k ) -\nabla \bm{\mu}^\top\bm{\ell}(\varphi_k),  \eta \xi_k \right>]\\
& \le (1-2\eta c)\|\varphi_{k} - \theta_*\|^2 + \eta^2r^2 + \eta^2 \mathbb{E}[\|M(\theta_* - (\varphi_k - \eta \xi_k))\|^2 ] + 2 \eta^3r^2M \\
& \le (1-2\eta c)\|\varphi_{k} - \theta_*\|^2 + \eta^2r^2 + \eta^2M^2 \|\varphi_{k} - \theta_*\|^2 +  \mathbb{E}[\left< \varphi_{k} - \theta_*, \eta\xi_k\right> ] \\
& \ ~~~~~~ + \eta^2M^2\mathbb{E}[\| \eta \xi_k \|^2] + 2 \eta^3r^2M  \\
& \le (1-2\eta c + \eta^2M^2) \|\varphi_{k} - \theta_*\|^2 + \eta^2r^2(1+\eta M)^2,
\end{align*}
where the second inequality is due to the convexity assumption and $\mathbb{E}[\xi_k]=0$, the third and forth  inequalities are due to the Lipschitz continuity. We set $\rho = 2\eta c - \eta^2M^2$ and $\beta = \eta^2r^2(1+\eta M)^2$. If $\rho \ge 0$, we have $\eta \le \frac{c}{M^2}$, then we get
\begin{align*}
\mathbb{E}[\| \varphi_{k+1} - \theta_*\|^2] & \le (1-\rho) \|\varphi_{k} - \theta_*\|^2 + \beta \\
& \le (1-\rho)^k \|\varphi_0 - \theta_*\|^2 + \sum_{j=0}^{k-1}(1-\rho)^j \beta \\
& \le (1-\rho)^k \|\varphi_0 - \theta_*\|^2 + \frac{\beta}{\rho}.
\end{align*}
So if $K \le \frac{1}{\rho}\log \left(\frac{\rho\varepsilon_0}{\beta}\right)$, we have $\mathbb{E}[\| \varphi_{K+1} - \theta_*\|^2] \le \frac{2\beta}{\rho}.$ Then by the Markov inequality, with probability at least $1-\delta$, we have
$$ \|\varphi_K - \theta_* \|^2 \le  \frac{2\beta}{\rho\delta}.$$

\subsection{Noise Upper Bound}\label{appendix:noise}

Suppose the noise produced by the EW method is $\bar{\xi} = \| \nabla \bm{{\mu}}^\top\bm{\ell}(\tilde{\bm{\mathcal{D}}};\theta) - \nabla \bm{{\mu}}^\top\bm{\ell}(\bm{\mathcal{D}};\theta) \|$ and $\|\bar{\xi}\|^2 \le R$. The noise produced by the RLW method is $\xi = \| \nabla \bm{{\lambda}}^\top\bm{\ell}(\tilde{\bm{\mathcal{D}}};\theta) - \nabla \bm{{\mu}}^\top\bm{\ell}(\bm{\mathcal{D}};\theta) \|$. We have
\begin{align*}
\|\xi\|^2 &= \| \nabla \bm{{\lambda}}^\top\bm{\ell}(\tilde{\bm{\mathcal{D}}};\theta) - \nabla \bm{{\mu}}^\top\bm{\ell}(\tilde{\bm{\mathcal{D}}};\theta) + \nabla \bm{{\mu}}^\top\bm{\ell}(\tilde{\bm{\mathcal{D}}};\theta)- \nabla \bm{{\mu}}^\top\bm{\ell}(\bm{\mathcal{D}};\theta) \|^2 \\
& = \|(\bm{{\lambda}}^\top -\bm{{\mu}}^\top) \nabla\bm{\ell}(\tilde{\bm{\mathcal{D}}};\theta)  \|^2 +2\left<  (\bm{{\lambda}}^\top -\bm{{\mu}}^\top)\bm{\ell}(\tilde{\bm{\mathcal{D}}};\theta),\bar{\xi} \right> + \|\bar{\xi}\|^2.
\end{align*}
Because the noise $\bar{\xi}$ can be any direction, there exists a constant $s >0$ such that $\|\bar{\xi}\|^2 = R$ and $\bar{\xi} = s(\bm{{\lambda}}^\top -\bm{{\mu}}^\top)\nabla\bm{\ell}(\tilde{\bm{\mathcal{D}}};\theta)$. Then, we have $\|\xi\|^2 \le (1+2s) \|\bm{{\lambda}} -\bm{{\mu}}\|^2\|\nabla \bm{\ell}(\tilde{\bm{\mathcal{D}}};\theta)\|^2 + R$. Thus, the norm of the noise provided by the RLW method has a larger supremum than EW.

\section{Additional Details about the XTREME Benchmark}

\begin{table}[!h]
\centering
\caption{The numbers of training, validation, and test data for each language in PI and POS problems from the XTREME benchmark.}
\label{tab:nlp_data_count}
\begin{tabular}{ccc}
\toprule
& \textbf{PI} & \textbf{POS} \\
\midrule
\texttt{en} & 49.4K+2.0K+2.0K & 6.9K+1.8K+3.2K \\
\texttt{zh} & 49.4K+2.0K+2.0K & 4.0K+0.5K+2.9K \\
\texttt{de} & 49.4K+2.0K+2.0K & - \\
\texttt{es} & 49.4K+2.0K+2.0K & - \\
\texttt{te} & - & 1.0K+0.1K+0.1K \\
\texttt{vi} & - & 1.4K+0.8K+0.8K \\
\bottomrule
\end{tabular}
\end{table}

\section{Additional Experimental Results} \label{sec:additional_results}

\subsection{Results on the CityScapes Dataset}

\paragraph{Dataset.} The {CityScapes} dataset \cite{CordtsORREBFRS16} is a large-scale urban street scene understanding dataset and it is comprised of a diverse set of stereo video sequences recorded from 50 different cities in fine weather during the daytime. It contains 2,975 and 500 annotated images for training and test, respectively. This dataset includes two tasks: 7-class semantic segmentation and depth estimation.

\paragraph{Implementation details.} For the {CityScapes} dataset, the network architecture and optimizer are the same as those in the NYUv2 dataset. We resize all the images to $128\times 256$ and set the batch size to 64 for training. We use the cross-entropy loss and $L_1$ loss for the semantic segmentation and depth estimation tasks, respectively.

\paragraph{Results.} The results on the CityScapes dataset are shown in Table~\ref{tbl:mtl-cityscapes}. The empirical observations are similar to those on the NYUv2 dataset in Table~\ref{tbl:mtl-nyu_dmtl}. Firstly, both the RLW and RGW strategies significantly outperform the EW method. Secondly, the RLW method can outperform most of the loss balancing baselines except the IMTL-L method. Moreover, the RGW method achieves 2.36\% performance improvement and outperforms all of the baselines.


\begin{table}[!h]
\centering
\caption{Performance on the \textbf{CityScapes} dataset with two tasks: 7-class semantic segmentation and depth estimation. The best results for each task on each measure over loss/gradient balancing methods are marked with superscript $*$/$\dag$. The best results for each task on each measure are highlighted in \textbf{bold}. $\uparrow$ ($\downarrow$) means the higher (lower) the result, the better the performance.}
\label{tbl:mtl-cityscapes}
\begin{tabular}{ccccccc}
\toprule
\multicolumn{2}{c}{\multirow{2.5}{*}{\textbf{Methods}}} & \multicolumn{2}{c}{\textbf{Segmentation}} & \multicolumn{2}{c}{\textbf{Depth}}&
\multirow{2.5}{*}{\bm{$\Delta_{\mathrm{p}}$}${\uparrow}$}\\
\cmidrule(lr){3-4} \cmidrule(lr){5-6}
& & \textbf{mIoU${\uparrow}$} &  \textbf{Pix Acc$\uparrow$} &  \textbf{Abs Err$\downarrow$} & \textbf{Rel Err$\downarrow$} \\
\midrule
& EW  & 68.71 & 91.50 & 0.0132 & 45.58 & +0.00\%\\
\midrule
\multirow{5}{*}{\rotatebox{90}{Loss Bal.}} & UW & 68.84 & 91.53 & 0.0132 & 46.18 & -0.09\%\\
& DWA & 68.56 & 91.48 & 0.0135 & 44.49 & +0.05\% \\
& IMTL-L & \textbf{69.71}$^*$ & 91.77$^*$ & 0.0128$^*$ & 45.08 & +1.58\%$^*$ \\
& MOML & 69.34 & {91.65} & 0.0129 & 46.33 & +0.59\% \\
& \textbf{RLW (ours)} & 68.78 & 91.45 & 0.0134 & \textbf{43.68}$^*$ & +0.69\% \\
\midrule
\multirow{9}{*}{\rotatebox{90}{Gradient Bal.}} & MGDA-UB & 68.41 & 91.13 & \textbf{0.0124}$^\dag$ & 46.85 & +0.64\% \\
& GradNorm & 68.60 & 91.48 & 0.0133 & 45.32 & +0.01\% \\
& PCGrad & 68.54 & 91.47 & 0.0135 & 44.82 & -0.10\%\\
& GradDrop & 68.62 & 91.45 & 0.0136 & 45.05 & -0.42\%\\
& IMTL-G & 68.62 & 91.48 & 0.0130 & 44.29 & +1.09\%\\
& GradVac & 68.60 & 91.47 & 0.0134 & 44.92 & -0.06\%\\
& CAGrad & 68.89 & 91.50 & 0.0128 & 44.72 & +1.38\%\\
& RotoGrad & 68.96 & 91.47 & 0.0127 & 43.85$^\dag$ & +2.13\%\\
& \textbf{RGW (ours)} & 69.68$^\dag$ & \textbf{91.85}$^\dag$ & 0.0127 & 43.91 & \textbf{+2.36\%}$^\dag$\\
\bottomrule
\end{tabular}
\end{table}

\subsection{Results on the CelebA Dataset}

\paragraph{Dataset.} The CelebA dataset \cite{liu2015faceattributes} is a large-scale face attributes dataset with 202,599 face images, each of which has 40 attribute annotations. It is split into three parts: 162,770, 19,867, and 19,962 images for training, validation, and testing, respectively. Hence, this dataset contains 40 tasks and each task is a binary classification problem for one attribute.

\begin{table}[!h]
\centering
\caption{Average classification accuracy (\%) of different methods on the \textbf{CelebA} dataset with forty tasks. The best results over loss/gradient balancing methods are marked with superscript $*$/$\dag$. The best results are highlighted in \textbf{bold}. }
\label{tbl:mtl-celeba}
\begin{tabular}{ccc}
\toprule
\multicolumn{2}{c}{\multirow{2}{*}{\textbf{Methods}}} & \multirow{2}{*}{\textbf{Avg Acc}}\\
& & \\
\midrule
& EW & 90.70 \\
\midrule
\multirow{5}{*}{\rotatebox{90}{Loss Bal.}} & UW & 90.84 \\
& DWA & 90.77\\
& IMTL-L & 90.46\\
& MOML & \textbf{90.94}$^*$\\
& \textbf{RLW (ours)} & 90.73\\
\midrule
\multirow{9}{*}{\rotatebox{90}{Gradient Bal.}} & MGDA-UB & 90.40\\
& GradNorm & 90.77 \\
& PCGrad & 90.85$^\dag$ \\
& GradDrop & 90.71 \\
& IMTL-G & 90.80\\
& GradVac & 90.75\\
& CAGrad & 90.72 \\
& RotoGrad & 90.45\\
& \textbf{RGW (ours)} & 90.00\\
\bottomrule
\end{tabular}
\end{table}

\paragraph{Implementation details.} We use the ResNet-18 network as a shared feature extractor and a fully connected layer with two output units as a task-specific head for each task. All the images are resized to $64\times 64$. The Adam optimizer with the learning rate as $10^{-3}$ is used for training and the batch size is set to 512. The cross-entropy loss is used for the 40 tasks.

\paragraph{Results.} Since the number of tasks in the CelebA dataset is large, we only report the average classification accuracy on the forty tasks in Table~\ref{tbl:mtl-celeba}. According to the results, the proposed RLW strategy slightly outperforms the EW method and performs comparably with loss balancing baseline methods. However, we can find that the RGW method and most of the gradient balancing methods are worse or achieve very limited improvement over the EW method, which indicates the gradient weighting is not suitable for the CelebA dataset.


\subsection{Results on the Office-31 and Office-Home Datasets}

\paragraph{Datasets.} The Office-31 dataset \cite{saenko2010adapting} consists of three domains: Amazon (\textbf{A}), DSLR (\textbf{D}), and Webcam (\textbf{W}), where each domain contains 31 object categories, and it contains 4,110 labeled images. We randomly split the whole dataset with 60\% for training, 20\% for validation, and the rest 20\% for testing.
The Office-Home dataset \cite{venkateswara2017deep} has four domains: artistic images (\textbf{Ar}), clip art (\textbf{Cl}), product images (\textbf{Pr}), and real-world images (\textbf{Rw}). It has 15,500 labeled images in total and each domain contains 65 classes. We make the same split as the Office-31 dataset. For both datasets, we consider the multi-class classification problem on each domain as a task. Similar to multilingual problems from the XTREME benchmark, each task in both Office-31 and Office-Home datasets has its own input data.

\paragraph{Implementation details.} We use the same configuration for the Office-31 and Office-Home datasets. Specifically, the ResNet-18 network pre-trained on the ImageNet dataset is used as a shared backbone among tasks and a fully connected layer is applied as a task-specific output layer for each task. All the input images are resized to $224\times 224$. We use the Adam optimizer with the learning rate as $10^{-4}$ and the weight decay as $10^{-5}$ and set the batch size to 128 for training. The cross-entropy loss is used for all tasks in both datasets.

\begin{table}[!h]
\caption{Classification accuracy (\%) of different methods on the \textbf{Office-31} and \textbf{Office-Home} datasets. The best results for each domain over loss/gradient balancing methods are marked with superscript $*$/$\dag$. The best results for each task are highlighted in \textbf{bold}. }
\label{tbl:mtl-officehome}
\centering
\resizebox{\linewidth}{!}{
\begin{tabular}{cc|cccc|ccccc}
\toprule
\multicolumn{2}{c|}{\multirow{2.5}{*}{\textbf{Methods}}} & \multicolumn{4}{c|}{\textbf{Office-31}} & \multicolumn{5}{c}{\textbf{Office-Home}} \\
\cmidrule(r){3-6} \cmidrule(l){7-11}
&&{\textbf{A}} & {\textbf{D}} & {\textbf{W}} & {\textbf{Avg}} &{\textbf{Ar}}&{\textbf{Cl}}&{\textbf{Pr}}&{\textbf{Rw}}&{\textbf{Avg}}\\
\midrule
& EW & 82.73 & 96.72 & 96.11 & 91.85 & 62.99 & 76.48 & 88.45 & 77.72 & 76.41\\
\midrule
\multirow{5}{*}{\rotatebox{90}{Loss Bal.}} & UW & 82.73 & 96.72$^*$ & 95.55 & 91.66 & 63.94 & 75.62 & 88.55 & 78.05 & 76.54\\
& DWA & 82.22 & 96.72$^*$ & 96.11 & 91.68 & 63.37 & 76.05 & 89.08 & 77.62 & 76.53\\
& IMTL-L & 83.76 & 96.72$^*$ & 95.55 & 92.01 & \textbf{65.46}$^*$ & \textbf{79.08}$^*$ & 88.45 & 78.81 & 77.95$^*$\\
& MOML & \textbf{84.78}$^*$ & 95.08 & 96.67$^*$ & 92.17 & 64.70 & 77.03 & 88.24 & 80.00 & 77.49\\
& \textbf{RLW (ours)} & 83.76 & 96.72$^*$ & 96.67$^*$ & {92.38}$^*$ & 62.80 & 76.48 & \textbf{90.57}$^*$ & \textbf{80.21}$^*$ & {77.52}\\
\midrule
\multirow{9}{*}{\rotatebox{90}{Gradient Bal.}} & MGDA-UB  & 81.02 & 95.90 & \textbf{97.77}$^\dag$ & 91.56 & 64.32 & 75.29 & 89.72 & 79.35 & 77.17\\
& GradNorm & 83.93 & \textbf{97.54}$^\dag$ & 94.44 & 91.97 & \textbf{65.46}$^\dag$ & 75.29 & 88.66 & 78.91 & 77.08\\
& PCGrad & 82.22 & 96.72 & 95.55 & 91.49 & 63.94 & 76.05 & 88.87 & 78.27 & 76.78\\
& GradDrop & 84.27$^\dag$ & 95.08 & 96.11 & 91.82 & 64.70 & 77.03 & 88.02 & 79.13 & 77.22\\
& IMTL-G & 82.22 & 95.90 & 96.11 & 91.41 & 63.37 & 76.05 & 89.19 & 79.24 & 76.96\\
& GradVac & 82.73 & \textbf{97.54}$^\dag$ & 95.55 & 91.94 & 63.18 & 76.48 & 88.66 & 77.83 & 76.53\\
& CAGrad & 82.22 & 96.72 & 96.67 & 91.87 & 63.75 & 75.94 & 89.08 & 78.27 & 76.75\\
& RotoGrad & 82.90 & 96.72 & 96.11 & 91.91 & 61.85 & 77.03 & 90.36$^\dag$ & 78.59 & 76.95\\
& \textbf{RGW (ours)} & 84.27$^\dag$ & 96.72 & 96.67 & \textbf{92.55}$^\dag$ & 65.08 & 78.65$^\dag$ & 88.66 & 79.89$^\dag$ & \textbf{78.07}$^\dag$\\
\bottomrule
\end{tabular}}
\end{table}

\paragraph{Results.} According to the results shown in Table~\ref{tbl:mtl-officehome}, we can see both the RLW and RGW strategies outperform the EW method on both two datasets in terms of the average classification accuracy over tasks, which implies the effectiveness of the RW methods. Moreover, the RGW method achieves the best performance (92.55\% and 78.07\% in term of the average accuracy) over all baselines on the Office-31 and Office-Home datasets, respectively.


\end{document}